\documentclass{article}

\usepackage{microtype}
\usepackage{graphicx}
\usepackage{subfigure}
\usepackage{booktabs}

\usepackage{color}
\usepackage{multirow}
\usepackage{threeparttable}
\usepackage{verbatim}

\usepackage{amsmath}
\usepackage{amsthm}
\newtheorem{theorem}{Theorem}

\usepackage{makecell}





\usepackage[accepted]{icml2021}

\icmltitlerunning{FTSO: Effective NAS via First Topology Second Operator}

\begin{document}

\twocolumn[
\icmltitle{FTSO: Effective NAS via First Topology Second Operator}

\begin{icmlauthorlist}

\icmlauthor{Likang Wang}{ust}
\icmlauthor{Lei Chen}{ust}
\end{icmlauthorlist}

\icmlaffiliation{ust}{Department of Computer Science and Technology, The Hong Kong University of Science and Technology, Hong Kong}

\icmlcorrespondingauthor{Likang Wang}{lwangcg@connect.ust.hk}
\icmlcorrespondingauthor{lei Chen}{leichen@cse.ust.hk}

\icmlkeywords{Machine Learning, ICML, Neural Architecture Search, NAS, DARTS, PC-DARTS, Computer Vision, Image Classification}

\vskip 0.3in
]

\printAffiliationsAndNotice{}  

\begin{abstract}
Existing one-shot neural architecture search (NAS) methods have to conduct a search over a giant super-net, which leads to the huge computational cost. To reduce such cost, in this paper, we propose a method, called FTSO, to divide the whole architecture search into two sub-steps. Specifically, in the first step, we only search for the topology, and in the second step, we search for the operators. FTSO not only reduces NAS’s search time from days to $0.68$ seconds, but also significantly improves the found architecture's accuracy. Our extensive experiments on ImageNet show that within $18$ seconds, FTSO can achieve a $76.4\%$ testing accuracy, $1.5\%$ higher than the SOTA, PC-DARTS. In addition, FTSO can reach a $97.77\%$ testing accuracy, $0.27\%$ higher than the SOTA, with nearly $100\%$ ($99.8\%$) search time saved, when searching on CIFAR10.
\end{abstract}

\section{Introduction}
Since the great success of the AlexNet~\citep{AlexNet} in image classification, most modern machine learning models~\cite{wang2023Dionysus,is-mvsnet,wang2023flora} have been developed based on deep neural networks. For neural networks, their performance is greatly determined by the architectures. Thus, in the past decade, a tremendous amount of work~\citep{VGGNet,GoogleNet,ResNet} has been done to investigate proper network architecture design. However, as the network size has grown larger and larger, it has gradually become unaffordable to manually search for better network architectures due to the expensive time and resource overheads. To ease this problem, a new technique called neural architecture search (NAS) was introduced. It allows computers to search for better network architectures automatically instead of relying on human experts.

Early-proposed reinforcement learning-based NAS methods~\citep{NASRL,DNNARL,NASNet} typically have an RNN-based controller to sample candidate network architectures from the search space. Although these algorithms can provide promising accuracy, their computation cost is usually unaffordable, for instance, 1800 GPU-days are required for NASNet to find an image classification network on CIFAR10.

To ease the search efficiency problem, one-shot approaches~\citep{ENAS,ProxylessNAS,DARTS} with parameter sharing have been proposed. These methods first create a huge directed acyclic graph (DAG) super-net, containing the whole search space. Then, the kernel weights are shared among all the sampled architectures via the super-net. This strategy makes it possible to measure the candidate architecture's performance without repeatedly retraining it from scratch. However, these algorithms suffer from the super-nets' computational overheads. This problem is particularly severe for differentiable models~\citep{DARTS,PC-DARTS}.

Limited by current NAS algorithms' inefficiency, it is rather challenging to find satisfying network architectures on large-scale datasets and complex tasks. For instance, current speed-oriented NAS approaches generally require days to accomplish one search trial on ImageNet, for example, $8.3$ GPU-days for ProxylessNAS~\citep{ProxylessNAS} and $3.8$ GPU-days for PC-DARTS~\citep{PC-DARTS}. Therefore, we argue that it is essential to propose a new well-defined search space, which is not only expressive enough to cover the most powerful architectures, but also compact enough to filter out the poor architectures.

\begin{figure*}[htbp]
\centering
\includegraphics[width=0.75\textwidth]{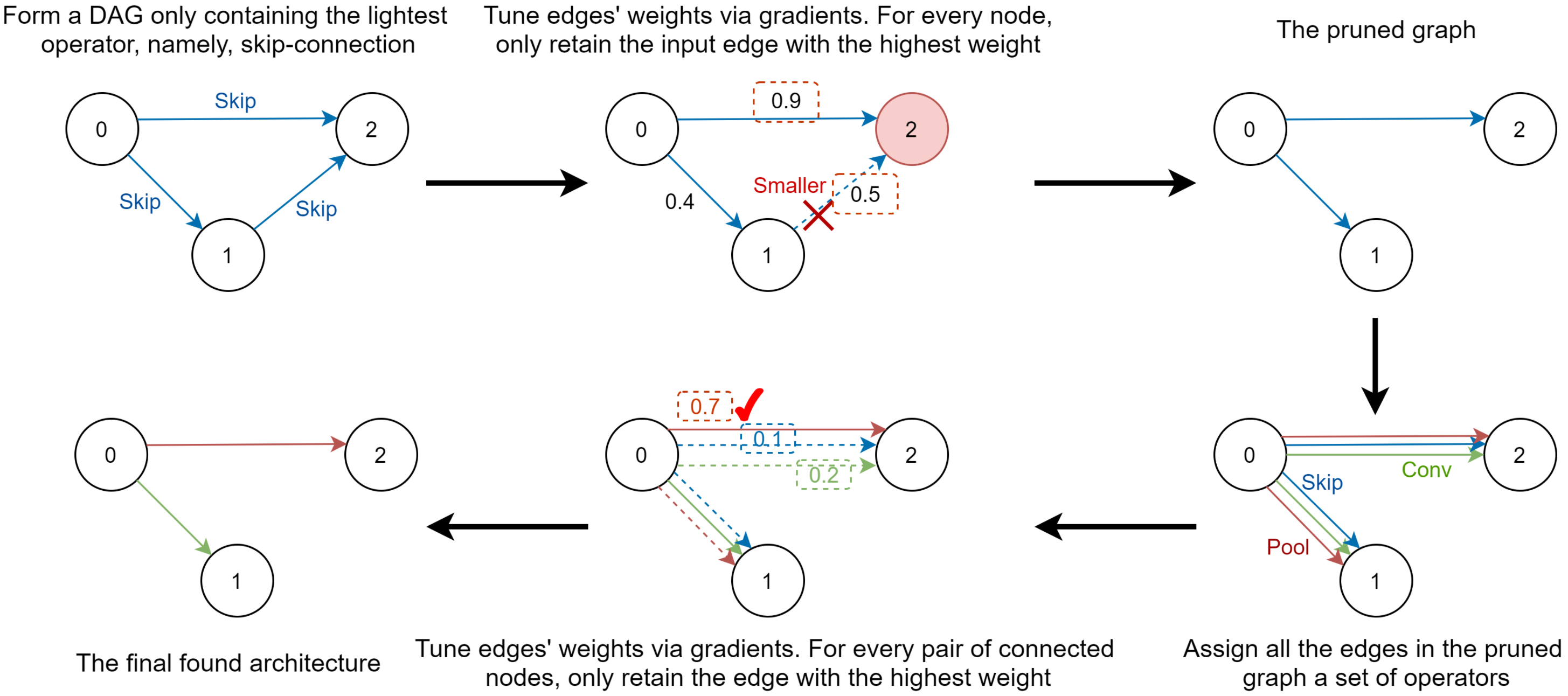}
\vspace{-1ex}
\caption{The main structure of FTSO.}
\vspace{-2ex}
\label{FTSO_Structure}
\end{figure*}

We are motivated by \citet{Understanding_Cell}, who demonstrate that randomly replacing operators in a found architecture does not harm the accuracy much. As such, we believe that there would be no reduction in the test accuracy if we omit the influence of operators and cluster architectures according to the topology. Thus, in this paper, we propose to separately search for the network topology and the operators. We name this new method Effective NAS via First Topology Second Operator (FTSO). 

In this paper, we first mathematically prove that FTSO reduces the number of network parameters by $10^{8}$, decreases the FLOPs per iteration by $10^{5}$ and lowers the operator's complexity in magnitude. We then empirically reveal that FTSO shortens the required search period from 50 epochs to one iteration. Besides the great improvement in efficiency, FTSO also significantly promotes the effectiveness by easing the over-fitting phenomenon and the Matthew effect \citep{Understanding_Cell}. To be specific, each architecture in DARTS has only one iteration to tune its kernel weights, and within one iteration, only the operators with few parameters may converge. The result is that the simpler operators outperform the more powerful ones in the super-net, then larger gradients to enlarge their advantages are achieved. In this way, the found architectures tend to only contain the simplest operators and perform poorly on both the training and testing sets. Such phenomenon is called the Matthew effect.

\begin{algorithm}[htbp]
\caption{topology search}
\label{alg:algorithm_topology_search} 
\begin{algorithmic}
\REQUIRE a set of nodes: $n_k$
\ENSURE the pruned architecture: $A_p$
\STATE 1. Create an directed edge $e_{i,j}$ with weight $\beta_{i,j}$ between each pair of nodes $n_i$ and $n_j$ ($i<j$) \par
\STATE 2. Assign each edge $e_{i,j}$ a \textit{skip connection} operator $o_{i,j}$ with kernel weights $w_{i,j}$ \par
\WHILE{still in the first epoch}
    \STATE 1. Forward-propagate following $n_j=\sum_{i<j}o(n_i)\beta_{i,j}$\par
    \STATE 2. Update architecture $\beta$ by descending $\nabla_\beta \mathcal{L}_{val}(w,\beta)$\par
    \STATE 3. Update weights $w$ by descending $\nabla_w \mathcal{L}_{train}(w,\beta)$\par
\ENDWHILE
\FOR{each node $n_j \in A_p$}
    \STATE $T_j$ $\leftarrow$ the second largest $\beta_{i,j}$\par
    \FOR{each node $n_i$}
        \IF{$\beta_{i,j}<T_j$}
            \STATE Prune edge $e_{i,j}$
        \ENDIF
    \ENDFOR
\ENDFOR
\STATE Derive the pruned architecture $A_p$.
\end{algorithmic}
\end{algorithm}

Our extensive experiments show that FTSO can accomplish the whole architecture search in $0.68$ seconds. On ImageNet, FTSO achieves $76.4\%$ testing accuracy, $1.5\%$ higher than the SOTA, within a mere 18 seconds. More importantly, when we only search for one iteration, FTSO consumes less than $0.68$ seconds, while reaching $75.64\%$ testing accuracy, $0.74\%$ higher than the SOTA. Moreover, if we allow FTSO to search for $19$ minutes, $76.42\%$ Top1 and $93.2\%$ Top5 testing accuracy can be achieved. In addition, FTSO can reach $97.77\%$ testing accuracy, $0.27\%$ higher than the SOTA, with nearly $100\%$ ($99.8\%$) search time saved, when searching on CIFAR10. Although in this paper we have implemented FTSO within a continuous search space, we illustrate in Section \ref{section: Generalization to other tasks and search spaces} that FTSO can be seamlessly transferred to other NAS algorithms.

\begin{algorithm}[htbp]
\caption{operator search}
\label{alg:algorithm_operator_search}
\begin{algorithmic}
\REQUIRE{the pruned architecture produced by the topology search: $A_p$} 
\ENSURE{the found architecture: $A_f$} 
\IF{replace with convolutions}
    \STATE Replace all the retained operators $o_{i,j}$ in $A_p$ with convolutions\par
\ELSE
    \STATE Each node $n_j \leftarrow \sum_{i<j}\sum_{o\in\mathcal{O}}\frac{\exp{\alpha^o_{i,j}}}{\sum_{o'\in\mathcal{O}}\exp{\alpha^{o'}_{i,j}}}o(n_i)$\par
    \WHILE{not converged}
        \STATE Update architecture $\alpha$ by descending $\nabla_\alpha \mathcal{L}_{val}(w,\alpha)$\par
        \STATE Update weights $w$ by descending $\nabla_w \mathcal{L}_{train}(w,\alpha)$\par
    \ENDWHILE
\ENDIF
\FOR{each edge $e_{i,j} \in A_p$}
    \STATE Assign edge $e_{i,j}$ the operator $o'\in\mathcal{O}$ with the highest $\alpha^{o'}_{i,j}$\par
\ENDFOR
\STATE Derive the found architecture $A_f$ $\leftarrow$ $A_p$.
\end{algorithmic}
\end{algorithm}
\vspace{-2ex}

\begin{figure*}[htbp]
\begin{center}
\vspace{-2ex}
\subfigure[]{
		\begin{minipage}[b]{0.21\textwidth}
			\includegraphics[width=1\textwidth]{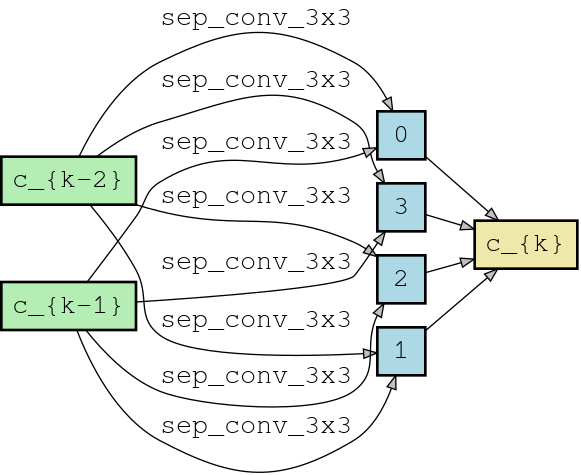} 
		\end{minipage}
		\label{fig:Normal Cell CIFAR10}
	}
    \subfigure[]{
		\begin{minipage}[b]{0.44\textwidth}
			\includegraphics[width=1\textwidth]{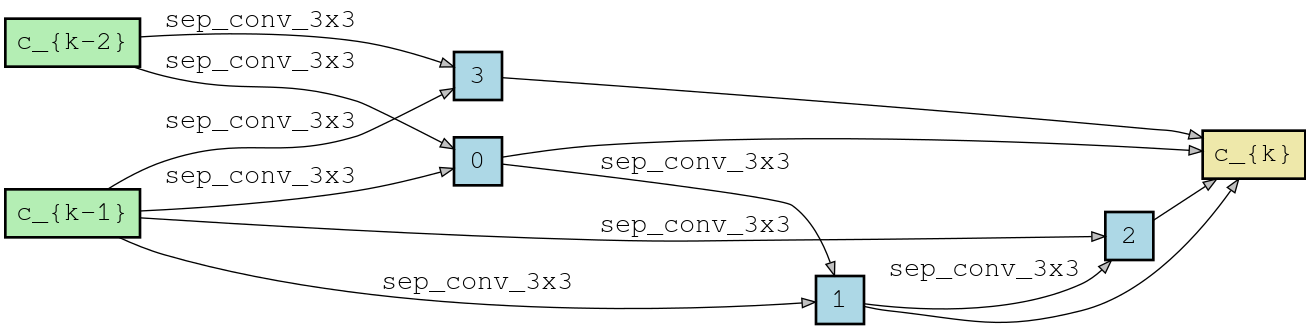} 
		\end{minipage}
		\label{fig:Reduction Cell CIFAR10}
	}
\subfigure[]{
		\begin{minipage}[b]{0.21\textwidth}
			\includegraphics[width=1\textwidth]{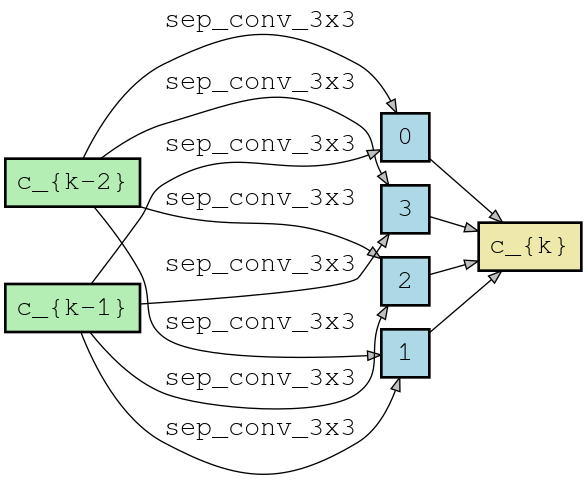} 
		\end{minipage}
		\label{fig:Normal Cell Full ImageNet}
		}
	\subfigure[]{
	    \begin{minipage}[b]{0.18\textwidth}
		    \includegraphics[width=1\textwidth]{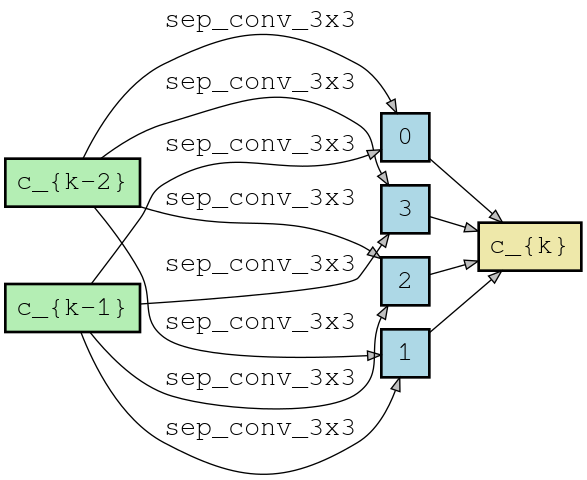} 
	    \end{minipage}
    	\label{fig:Reduction Cell Full ImageNet}
	}
\subfigure[]{
		\begin{minipage}[b]{0.41\textwidth}
			\includegraphics[width=1\textwidth]{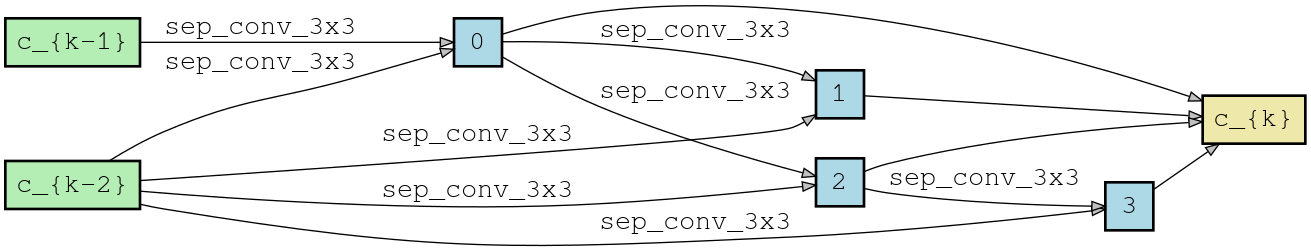} 
		\end{minipage}
		\label{fig:Normal Cell CIFAR10 sep + op}
	}
    \subfigure[]{
		\begin{minipage}[b]{0.36\textwidth}
			\includegraphics[width=1\textwidth]{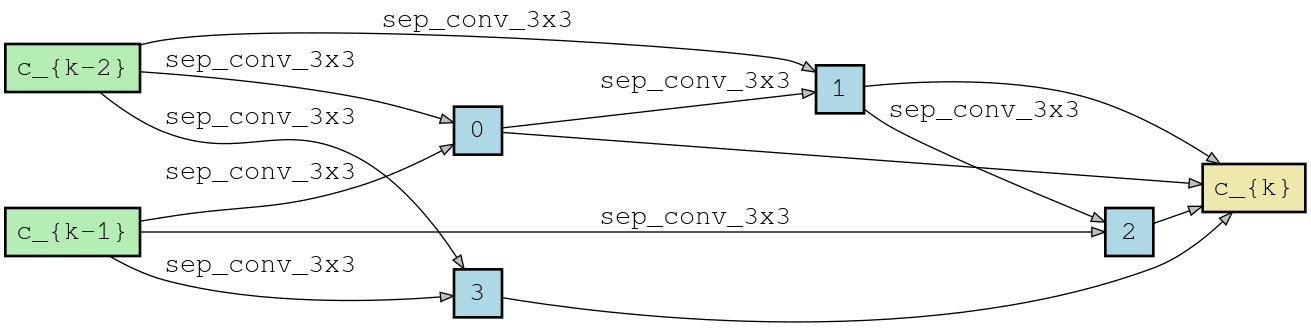} 
		\end{minipage}
		\label{fig:Reduction Cell CIFAR10 sep + op}
	}	
\subfigure[]{
		\begin{minipage}[b]{0.45\textwidth}
			\includegraphics[width=1\textwidth]{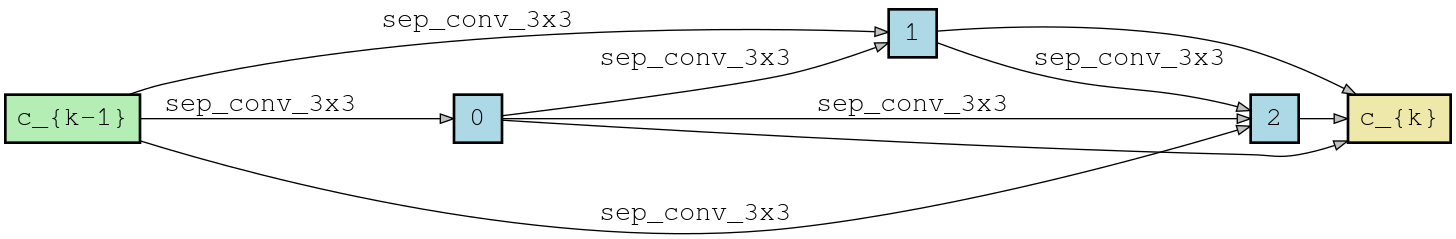} 
		\end{minipage}
		\label{fig:NASBenchallsep}
	}
    \subfigure[]{
		\begin{minipage}[b]{0.45\textwidth}
			\includegraphics[width=1\textwidth]{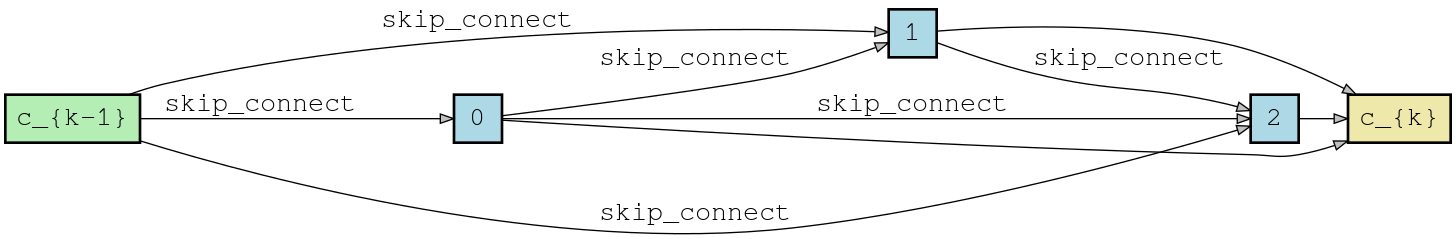} 
		\end{minipage}
		\label{fig:NASBenchallskip}
	}	
\vspace{-2ex}
\caption{FTSO's found architectures.    \subref{fig:Normal Cell CIFAR10} and \subref{fig:Reduction Cell CIFAR10}: Normal and reduction cells found on CIFAR10 after one epoch’s search;   \subref{fig:Normal Cell Full ImageNet} and \subref{fig:Reduction Cell Full ImageNet}: Normal and reduction cells found on the entire ImageNet after one epoch’s search;   \subref{fig:Normal Cell CIFAR10 sep + op} and \subref{fig:Reduction Cell CIFAR10 sep + op}: Normal and reduction cells found on CIFAR10, where we adopt the operator search, and use the \textit{$3\times3$ separable convolution} to search for the topology;   \subref{fig:NASBenchallsep}: FTSO's cell found on NATS-Bench; \subref{fig:NASBenchallskip}: DARTS's cell found on NATS-Bench.}
\vspace{-2ex}
\label{FTSO's found architecture CIFAR10}
\end{center}
\end{figure*}

\section{Related Work}
In general, existing NAS algorithms can be divided into three categories, namely, reinforcement learning-based, revolution-based and differentiable. Early-proposed reinforcement learning-based methods~\citep{NASRL, NASNet} generally suffer from high computational cost and low-efficiency sampling. Instead of sampling a discrete architecture and then evaluating it, DARTS~\citep{DARTS} treats the whole search space as a continuous super-net. It assigns every operator a real number weight and treats every node as the linear combination of all its transformed predecessors. To be specific, DARTS's search space is a directed acyclic graph (DAG) containing two input nodes inherited from previous cells, four intermediate nodes and one output node. Each node denotes one latent representation and each edge denotes an operator. Every intermediate node $\mathbf{x}_j$ is calculated from all its predecessors $\mathbf{x}_i$, i.e., $\mathbf{x}_j=\sum_{i<j}\sum_{o\in\mathcal{O}}\frac{\exp{\alpha^o_{i,j}}}{\sum_{o'\in\mathcal{O}}\exp{\alpha^{o'}_{i,j}}}o(\mathbf{x}_i)$, where $\mathcal{O}$ denotes the collection of all candidate operators, $\alpha^o_{i,j}$ denotes the weight for operator $o$ from node $i$ to $j$. This strategy allows DARTS to directly use gradients to optimize the whole super-net. After the super-net converges, DARTS only retains the operators with the largest weights. In this way, the final discrete architecture is derived. The main defect of DARTS is that it needs to maintain and do all calculations on a giant super-net, which inevitably leads to heavy computational overheads and over-fitting.

To relieve the computational overhead of DARTS, DARTS-ES~\citep{DARTS-ES} reduces the number of searching epochs via early stopping, according to the Hessian matrix's max eigenvalue. PC-DARTS~\citep{PC-DARTS} decreases the FLOPs per iteration by only calculating a proportion of the input channels and keeping the remainder unchanged, and normalizes the edge weights to stabilize the search. To be specific, in PC-DARTS, every intermediate node $\mathbf{x}_j$ is computed from all its predecessors $\mathbf{x}_i$, i.e., $\mathbf{x}_j=\sum_{i<j}\frac{\exp{\beta_{i,j}}}{\sum_{i'<j}\exp{\beta_{i',j}}}f_{i,j}(\mathbf{x}_i)$, where $\beta_{i,j}$ describes the input node $i$'s importance to the node $j$, and $f_{i,j}$ is the weighted sum of all the candidate operators' outputs between node $i$ and $j$. Specifically, $f_{i,j}(\mathbf{x}_i,\mathbf{S}_{i,j})=\sum_{o\in{\mathcal{O}}}\frac{e^{\alpha^o_{i,j}}}{\sum_{o'\in{\mathcal{O}}}e^{\alpha^{o'}_{i,j}}}o(\mathbf{S}_{i,j}*\mathbf{x}_i)+(1-\mathbf{S}_{i,j})*\mathbf{x}_i$, where $\mathbf{S}_{i,j}$ denotes a binary vector, in which only $1/K$ elements are $1$.

\section{FTSO: Effective NAS via First Topology Second Operator}
\label{section: FTSO: Effective NAS via First Topology Second Operator}
\label{ssec:topologySearch}

Existing NAS approaches generally suffer from a huge computational overhead and an unsatisfying testing accuracy led by the huge search space. Such problems are especially stern in one-shot and differentiable methods because these algorithms need to maintain and even do all the calculations directly on the search space.

\begin{table*}[htbp]
\vspace{-2ex}
\caption{Different configurations' impacts to FTSO}
\label{Table:: Ablation study on FTSO}
\begin{center}
\begin{threeparttable}
\begin{tabular}{lcccccc}
\toprule
\multirow{2}*{\bf{FTSO's Configuration}}  &
\multicolumn{2}{c}{\bf{CIFAR10 Error (\%)}} &
\multicolumn{2}{c}{\bf{ImageNet Error (\%)}} &
\multicolumn{2}{c}{\bf{Search Cost (GPU-days)}}\\
 \cmidrule(lr){2-3} \cmidrule(lr){4-5} \cmidrule(lr){6-7}
 & \bf{600 Epoch} & \bf{1200 Epoch} & \bf{Top1} & \bf{Top5} & \bf{CIFAR} & \bf{ImageNet} \\

 \midrule
CIF. Topo(skip,1it.)         &$2.68$         &$2.54$        &$24.36$  &$7.27$  
&$7.87\times10^{-6}$   &-  \\
CIF. Topo(skip,1ep.)         &$2.48$         &$2.23$   &$23.60$  &$7.01$
&$2\times10^{-4}$  &-   \\
CIF. Topo(skip,50ep.)  &2.77   &2.52   &-  &- &$0.01$  &-   \\
CIF. Topo(skip,1ep.)+Op(18ep.)    &$2.85$ &$2.59$  &$23.97$         &$7.20$          &$0.01$  &-      \\
CIF. Topo(skip,50ep.)+Op(50ep.)    &$2.59$  &$2.36$   &$23.97$         &$7.12$       &$0.05$  &- \\
CIF. Topo(m.p.,50ep.)+Op(50ep.)   &$2.83$  &$2.48$    &-         &-    &0.05   &- \\
CIF. Topo(sep3,50ep.)         &2.63         &2.48   &- &-      &$0.02$     &-    \\
CIF. Topo(sep3,50ep.)+Op(50ep.)   &$2.56$  &$2.52$    &$24.73$         &$7.60$    &$0.06$   &- \\
CIF. Topo(3op,50ep.)+Op(50ep.)   &$2.59$  &$2.50$    &-         &-    &$0.05$   &- \\
CIF. Topo(4op,50ep.)+Op(50ep.)   &$2.68$  &$2.59$    &-     &-    &$0.07$    &- \\
Part ImageNet Topo(skip,1it.)    &- &-     &$24.03$         &$7.07$         &-  &$0.0002$     \\
Part ImageNet Topo(skip,1ep.)    &- &-     &$23.94$         &$7.05$         &-  &$0.0017$     \\
Part ImageNet Topo(skip,6ep.)    &- &-     &$24.59$         &$7.38$       &-   &$0.009$    \\
Full ImageNet Topo(skip,1ep.)    &$2.35$ &$2.26$     &$23.58$         &$6.80$     &-    &$0.01$ \\
\bottomrule
\end{tabular}
\begin{tablenotes}
		\item[*] '3op' means \textit{max pool 3x3}, \textit{skip connect} and \textit{none};     '4op' means: \textit{sep conv 3x3}, \textit{max pool 3x3}, \textit{skip connect} and \textit{none};      'm.p.' means: \textit{max pool 3x3};      'sep3' means: \textit{sep conv 3x3}.      
		\item[*] 'CIF.' means CIFAR10;      'Topo(skip,1it)' means to search for topology with only skip connections for 1 iteration;   '1ep' means 1 epoch;    'Part ImageNet' means to search on part of ImageNet.
     \end{tablenotes}
\end{threeparttable}
\end{center}
\vspace{-3ex}
\end{table*}

To ease such problems, it is of great demand to investigate the correlations among different architectures and to shrink the search space according to the prior knowledge. We notice that there is an important observation in \citet{Understanding_Cell} that randomly substituting the operators in a found architecture does not observably influence the testing accuracy. Therefore, it would be great inspiration if we could cluster the architectures according to their connection topologies. To be specific, suppose we find an architecture only containing the simplest operators achieves high accuracy on the testing set, if we replace all the connections in this architecture with powerful operators, the converted architecture can perform well on the testing set with high confidence. 

In this paper, we first propose to find the most effective network topology with simple operators. We then fix the topology, and search for the most suitable operators for the given topology. In this way, the testing accuracy can still be guaranteed, while the search space is shrunk in magnitude. We name this new NAS algorithm Effective NAS via First Topology Second Operator (FTSO).

\begin{table}[htbp]
\vspace{-3ex}
\caption{Symbol table}
\label{symbol table}
\begin{center}
\begin{threeparttable}
\begin{tabular}{ll}
\toprule
\bf{Symbol} & \bf{Meaning}\\
\midrule
$\alpha^o_{i,j}$ & Weight for operator $o$ from node $i$ to $j$ \\
$\beta_{i,j}$ & Node $i$'s importance to the node $j$ \\
\midrule
$f_{i,j}$ & \makecell[l]{Linear combination of all the candidate\\ operators $o$ with weight $\alpha^o_{i,j}$} \\
\bottomrule
\end{tabular}
\end{threeparttable}
\end{center}
\vspace{-2ex}
\end{table}

We summarize the symbols used in this section in Table \ref{symbol table}. As shown in Figure~\ref{FTSO_Structure}, we inherit the differentiable framework of PC-DARTS, and divide the architecture search into two phases. We name the two phases topology search and operator search, and illustrate how they work in Algorithms \ref{alg:algorithm_topology_search} and \ref{alg:algorithm_operator_search}, respectively. In the first phase, we form a super-net only containing the simplest operator, \textit{skip connection}. Since the \textit{skip connection} operator contains no kernel weights, we only need to optimize the architecture parameters $\beta_{i,j}$.
In fact, as shown in Table \ref{Table:: Ablation study on FTSO}, the \textit{max pooling} operator also delivers satisfying results for the topology search. There are two reasons we use \textit{skip connection}.
The first is that the \textit{skip connection} operator not only requires zero parameters, but also demands the minimum computational cost. The second reason is that \textit{max pooling} may lead to the loss of useful information if the network is deep. 
Furthermore, as the only difference between our topology search and the vanilla DARTS is the number of candidate operators, the pruned architecture's connectivity can be guaranteed.

Similar to DARTS and PC-DARTS, after the topology search, for every intermediate node $j$, we only retain its connections to its predecessors $i^*$ with the highest two $\beta_{i,j}$. In the second phase, we search for the operators suitable for the pruned topology with two strategies. The first strategy is similar to the vanilla DARTS. It replaces each operator in the pruned topology with a mix-operator $f_{i,j}$.
After that, we optimize the architecture parameters, $\alpha^o_{i,j}$, $\beta_{i,j}$ and the kernel weights $\omega_{i,j}^o$ alternatively. After the super-net converges,  we only retain one operator $o^*$ with the highest $\alpha^o_{i,j}$ for every two connected nodes $i$ and $j$. The second strategy directly replaces all the operators in the pruned topology with one single operator owning the highest model capacity, e.g., a convolution operator. 

\begin{table*}[htbp]
\caption{Comparison with existing state-of-the-art image classification architectures}
\label{Comp_CIFAR10}
\begin{center}
\begin{threeparttable}
\begin{tabular}{lcccccc}
\toprule
\multirow{2}*{\bf{Architecture}}  & 
\multicolumn{2}{c}{\bf{CIFAR Err. (\%)}} &
\multicolumn{2}{c}{\bf{ImageNet Err. (\%)}} &
\multicolumn{2}{c}{\bf{Search Cost (GPU-days)}}\\
 \cmidrule(lr){2-3} \cmidrule(lr){4-5} \cmidrule(lr){6-7}
 & \bf{600 Ep.} & \bf{1200 Ep.} & \bf{Top1} & \bf{Top5} & \bf{CIFAR} & \bf{ImageNet} \\
 \midrule
NASNet-A$^{\dagger}$\citep{NASNet}         &$2.65$         &-         &$26.0$  &$8.4$  &$1800$  &-\\
AmoebaNet$^{\dagger}$\citep{AmoebaNet}         &$2.55$         &-         &$24.3$  &$7.6$  &$3150$  &- \\
PNAS\citep{PNAS}         &$3.41$         &-         &$25.8$  &$8.1$  &$225$ &- \\
ENAS$^{\dagger}$\citep{ENAS} &$2.89$  &-  &-  &-  &$0.5$ &-\\
 \midrule
DARTS (2nd)$^{\dagger}$\citep{DARTS}         &$2.76$         &-   &$26.7$  &$8.7$      &1   &4.0     \\
SNAS$^{\dagger}$\citep{SNAS}         &$2.85$         &-    &$27.3$  &$9.2$     &1.5   \\
ProxylessNAS$^{\dagger*}$\citep{ProxylessNAS}         &$2.08$         &-    &$24.9$  &$7.5$     &$4.0$   &$8.3$      \\
P-DARTS$^{\dagger}$\citep{P-DARTS}         &$2.50$         &-     &$24.4$  &$7.4$   &0.3 &-         \\
BayesNAS$^{\dagger}$\citep{BayesNAS}         &$2.81$         &-  &$26.5$  &$8.9$     &0.2   &-         \\
 \midrule
PC-DARTS(CIFAR10)$^{\dagger}$\citep{PC-DARTS}         &$2.57$         &$2.50$    &$25.1$  &$7.8$     &$0.1$  &-       \\
PC-DARTS(ImageNet)$^{\dagger*}$\citep{PC-DARTS}         &-         &-    &$24.2$  &$7.3$     &- &$3.8$         \\
 \midrule
FTSO (CIFAR10 + 1 epoch)$^{\dagger}$         &2.48         &\bf{2.23}   &$23.60$  &$7.01$      
&$2\times10^{-4}$  &-   \\
FTSO (CIFAR10 + 1 iteration)$^{\dagger}$         &2.68         &2.54        &$24.36$  &$7.27$  
&$\bf{7.87\times10^{-6}}$   &-  \\
FTSO (Full ImageNet + 1epoch)$^{\dagger*}$    &$\bf{2.35}$ &$2.26$     &$\bf{23.58}$         &$\bf{6.80}$     &-    &$\bf{0.01}$ \\
\bottomrule
\end{tabular}
\begin{tablenotes}
		\item $^{\dagger}$ When testing on CIFAR10, these models adopt cut-out.
		\item $^*$ These models are directly searched on ImageNet.
     \end{tablenotes}
\end{threeparttable}
\end{center}
\vspace{-3ex}
\end{table*}

In this paper, we take the second strategy as the default configuration because it is not only much more efficient, but also avoids the over-fitting phenomenon and the Matthew effect in DARTS. To be specific, in DARTS-like methods, suppose the found network perfectly fits the data, then the super-net must severely over-fit, since the super-net is much larger than the found network. As we know, an over-fitting model can hardly generalize well. Thus, the generated sub-graph is not likely to be the best architecture.
While in the second strategy, since no super-net is adopted and all the simple operators in the sub-graph are replaced with powerful operators, the final architecture’s model capacity gets promoted. Additional empirical comparisons between these two strategies can be found in Section \ref{section: ablation study}.

\begin{table*}[htbp]
\vspace{-2ex}
\caption{Comparison with existing state-of-the-art image classification architectures (NATS-Bench)}
\label{Comp_NATSBench}
\begin{center}
\begin{threeparttable}
\begin{tabular}{lccccccccc}
\toprule
\multirow{2}*{\bf{Architecture}}
& \multicolumn{3}{c}{\bf{Search on CIFAR10}} &
\multicolumn{3}{c}{\bf{Search on CIFAR100}} &
\multicolumn{3}{c}{\bf{Search on ImageNet}}\\
 \cmidrule(lr){2-4} \cmidrule(lr){5-7} \cmidrule(lr){8-10}
 & \bf{CF10$^*$} & \bf{CF100} & \bf{ImageNet} & \bf{CF10} & \bf{CF100} & \bf{ImageNet} & \bf{CF10} & \bf{CF100} & \bf{ImageNet} \\
\midrule
DARTS (1st) &$54.30$ &$15.61$ &$16.32^{\dagger}$ &$86.57$ &$58.13$ &$28.50$ &$89.58$ &$63.89$ &$33.77$\\
DARTS (2nd) &$86.88$ &$58.61$ &$28.91$ &$91.96$ &$67.27$ &$39.47$ &$84.64$ &$55.15$ &$26.06$\\
 \midrule
FTSO &$93.98$ &$70.22$ &$45.57$ &$93.98$ &$70.22$ &$45.57$ &$93.98$ &$70.22$ &$45.57$\\
\bottomrule
\end{tabular}
\begin{tablenotes}
		\item $^{\dagger}$ This means that within NATS-Bench's search space, when we use the 1st order DARTS to search for architectures on the CIFAR10 dataset, the found architecture can achieve $16.32\%$ testing accuracy on ImageNet.
		\item $^*$ CF10 means testing accuracy (\%) on CIFAR10; CF100 means testing accuracy (\%) on CIFAR100
     \end{tablenotes}
\end{threeparttable}
\end{center}
\vspace{-2ex}
\end{table*}

In DARTS, the network topology and operators are jointly searched, which makes both the size and the computational cost of the super-net extremely high. We use $n$ to denote the number of nodes and $p$ to denote the number of candidate operators. Since we have two input nodes, one output node and $n-3$ intermediate nodes, the super-net contains a total of $\frac{1}{2}(n^2-3n)$ edges. At the same time, every edge keeps $p$ operators, thus, the total number of operators in DARTS is $\frac{1}{2}(n^2-3n)p$.
By comparison, there are only $\frac{1}{2}n(n-3)$ operations in our topology search, and $2(n-3)p$ operations in our operator search. This is because in the topology search, every edge contains only one operator; and in the topology search, every intermediate node only connects to two predecessors. Since $n$ is usually close to $p$, FTSO reduces the number of operations from $O(n^3)$ to $O(n^2)$.

\begin{theorem}
\vspace{+1ex}
\label{theorem_darts_flops_paras}
The total number of FLOPs and parameters of DARTS are $\frac{1}{2}pn(n-3)H_{out}W_{out}C_{out}(k^2C_{in}+1)$ and $\frac{1}{2}n(n-3)p(k^2C_{in}+1)C_{out}$ respectively.
\end{theorem}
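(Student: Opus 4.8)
The plan is to reduce the whole count to the product of two quantities: the number of operators present in the super-net, and the per-operator cost, where each operator is modelled as a generic $k \times k$ convolution mapping $C_{in}$ input channels to $C_{out}$ output channels over an $H_{out} \times W_{out}$ output. The operator count is already established in the text preceding the theorem: the DAG has $\frac{1}{2}n(n-3)$ edges, and each edge carries $p$ candidate operators, so the super-net holds $\frac{1}{2}n(n-3)p$ operators in total. It therefore suffices to pin down the per-operator FLOP and parameter figures and multiply by this count.

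First I would count the parameters of a single convolution. A $k \times k$ kernel from $C_{in}$ channels to one output channel uses $k^2 C_{in}$ weights plus one bias, i.e. $k^2 C_{in} + 1$ parameters per output channel; across $C_{out}$ output channels this is $(k^2 C_{in} + 1)C_{out}$. Multiplying by the operator count $\frac{1}{2}n(n-3)p$ then yields the stated parameter total $\frac{1}{2}n(n-3)p(k^2 C_{in}+1)C_{out}$.

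Next I would count the FLOPs of a single convolution. Producing one scalar of the output requires $k^2 C_{in}$ multiply-accumulate operations together with the bias addition, which I would bookkeep as $k^2 C_{in} + 1$ operations; since the output tensor has $H_{out} W_{out} C_{out}$ entries, one convolution costs $H_{out} W_{out} C_{out}(k^2 C_{in}+1)$ FLOPs. Multiplying again by $\frac{1}{2}n(n-3)p$ gives the claimed FLOP total $\frac{1}{2}pn(n-3)H_{out}W_{out}C_{out}(k^2 C_{in}+1)$.

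The arithmetic is routine; the only genuine subtleties are bookkeeping conventions and the modelling assumption. I would state explicitly that the $+1$ inside $k^2 C_{in}+1$ absorbs the bias term for the parameter count and the bias addition for the FLOP count, so that both formulas share the same factor, and I would flag that each candidate operator is treated as (or dominated by) a $k\times k$ convolution, so the bound describes the convolution-heavy regime the paper cares about. The main obstacle, such as it is, lies in justifying that the single-convolution cost is the correct unit to multiply by the operator count, i.e. that the super-net's cost is exactly the sum of its operators' individual costs with no shared computation double-counted or omitted; once this additivity is granted the two formulas follow immediately.
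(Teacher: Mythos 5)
Your proposal is correct and follows the same overall template as the paper's proof: count the $\tfrac{1}{2}n(n-3)p$ operators in the super-net and multiply by a per-operator cost, with the parameter count $(k^2C_{in}+1)C_{out}$ per convolution handled identically (the $+1$ being the bias). The one genuine divergence is in where the $+1$ inside the FLOP formula comes from. You obtain it by charging one bias addition per output scalar inside each convolution, so a single convolution costs $H_{out}W_{out}C_{out}(k^2C_{in}+1)$ FLOPs and the aggregation of the $p$ operator outputs at each node is treated as free. The paper instead counts the bare convolution at $k^2C_{in}H_{out}W_{out}C_{out}$ FLOPs (no bias add) and recovers the $+1$ from the $\tfrac{1}{2}pn(n-3)$ tensor summations that DARTS's mixed-operation nodes must perform, at $H_{out}W_{out}C_{out}$ FLOPs each. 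The two bookkeeping conventions happen to land on the same closed form, but they attribute the extra term to different pieces of the computation: yours is a property of any convolution with bias, while the paper's is a structural cost of the DARTS super-net's weighted aggregation, which is the quantity FTSO actually retains in Theorem~\ref{theorem_ftso_flops_paras} (whose FLOP count is exactly the summation cost). So your argument is valid as a derivation of the stated formula, but if you want the two theorems to be comparable on the same ledger, you should adopt the paper's convention and count the aggregation summations explicitly rather than the bias additions. Your closing remark about additivity (no shared computation across operators) is the right thing to flag and is implicitly assumed by the paper as well.
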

\begin{proof}
\vspace{-2ex}
Each vanilla convolutional operator needs $k^2C_{in}H_{out}W_{out}C_{out}$ FLOPs and $(k^2C_{in}+1)C_{out}$ parameters, where $k$ is the kernel size, $C_{in}$ is the input tensor's channel number and $H_{out}$, $W_{out}$ and $C_{out}$ are the output tensor's height, width and channel number respectively. For simplicity, assume all the candidate operators are convolutions. Since DARTS has $\frac{1}{2}pn(n-3)$ edges, it needs to compute $\frac{1}{2}pn(n-3)$ convolutions and $\frac{1}{2}pn(n-3)$ tensor summations. Owing to each tensor summation consuming $H_{in}W_{in}C_{in}$ FLOPs, DARTS requires a total of
$\frac{1}{2}pk^2n(n-3)C_{in}H_{out}W_{out}C_{out}$ convolution FLOPs and $\frac{1}{2}pn(n-3)H_{out}W_{out}C_{out}$ summation FLOPs. Thus, the overall FLOPs are parameters are $\frac{1}{2}pn(n-3)H_{out}W_{out}C_{out}(k^2C_{in}+1)$ and $\frac{1}{2}n(n-3)p(k^2C_{in}+1)C_{out}$, respectively.
\end{proof}

\begin{theorem}
\vspace{+1ex}
\label{theorem_ftso_flops_paras}
The total number of FLOPs and parameters of FTSO are $\frac{1}{2}n(n-3)H_{in}W_{in}C_{in}$ and $\frac{1}{2}n(n-3)$ respectively.
\end{theorem}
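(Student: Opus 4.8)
The plan is to mirror the bookkeeping of Theorem~\ref{theorem_darts_flops_paras}, but now specialised to the topology-search super-net, where every edge carries a single \textit{skip connection} rather than $p$ convolutions. First I would record the two structural facts already established in the text: the super-net has $\frac{1}{2}n(n-3)$ edges (coming from $2$ input nodes, $n-3$ intermediate nodes and $1$ output node), and each edge holds exactly one operator, namely the identity map $o(n_i)=n_i$. The key observation is that the skip connection performs no arithmetic and stores no kernel weights, so both the convolution-FLOP term and the kernel-parameter term of Theorem~\ref{theorem_darts_flops_paras} collapse to zero.

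For the FLOP count I would then isolate the only remaining arithmetic, which is the formation of each intermediate node $n_j=\sum_{i<j}o(n_i)\beta_{i,j}$. Adopting the same convention as in Theorem~\ref{theorem_darts_flops_paras} --- that one tensor summation over a tensor of shape $H_{in}\times W_{in}\times C_{in}$ costs $H_{in}W_{in}C_{in}$ FLOPs --- and noting that there is exactly one such summation per edge, the total becomes $\frac{1}{2}n(n-3)H_{in}W_{in}C_{in}$, where I use that the identity map preserves the tensor dimensions, i.e. $H_{out}=H_{in}$, $W_{out}=W_{in}$ and $C_{out}=C_{in}$.

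For the parameter count I would argue that, with all kernel weights eliminated, the only trainable quantities left are the architecture weights $\beta_{i,j}$, one scalar per edge. With $\frac{1}{2}n(n-3)$ edges this yields exactly $\frac{1}{2}n(n-3)$ parameters, which completes the statement.

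The step I expect to be the main obstacle is not any hard computation but rather keeping the accounting conventions consistent with Theorem~\ref{theorem_darts_flops_paras}: there the parameter total was dominated by kernel weights and the architecture weights were silently dropped, whereas here the situation is exactly reversed, so I must be explicit that \emph{parameters} now refers to the $\beta_{i,j}$, and equally explicit that the scalar multiplications by $\beta_{i,j}$ are treated as negligible (just as the softmax weightings were in the earlier count) so that only the tensor summations survive in the FLOP tally.
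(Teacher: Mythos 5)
Your proposal is correct and follows essentially the same route as the paper's proof: the paper likewise notes that each \textit{skip connection} costs $0$ parameters and $0$ FLOPs, leaving only the $\frac{1}{2}n(n-3)$ tensor summations (at $H_{in}W_{in}C_{in}$ FLOPs each, per the convention of Theorem~\ref{theorem_darts_flops_paras}). You are in fact more explicit than the paper on one point --- the paper never says where the $\frac{1}{2}n(n-3)$ parameters come from, and your identification of them as the architecture weights $\beta_{i,j}$ (one per edge) is the only reading consistent with the stated count.
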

\begin{proof}
\vspace{-2ex}
Each \textit{skip connection} operator needs $0$ parameters and $0$ FLOPs. If we first search for the topology and then directly substitute the operators, only $\frac{1}{2}n(n-3)$ tensor summations need to be calculated, since FTSO has $\frac{1}{2}n(n-3)$ operators.
\end{proof}

In addition to the reduction in the number of operations, FTSO also dramatically decreases the internal cost of the operations. This is because during the topology search all the powerful operators are replaced by the simple operators.
We summarize the main properties of DARTS and FTSO in Theorems \ref{theorem_darts_flops_paras} and \ref{theorem_ftso_flops_paras}. As a typical configuration, let $k=5$, $C_{in}=C_{out}=512$, $n=7$, $p=8$. Then, our algorithm requires only $\frac{1}{p(k^2C_{in}+1)C_{out}}=1.9\times 10^{-8}$ times the parameters and $\frac{1}{p(k^2C_{in}+1)}=9.8\times 10^{-6}$ times the forward-propagation FLOPs per iteration compared to those of DARTS. 

\begin{figure}[htbp]
	\centering
	\vspace{-1ex}
	\subfigure[]{
		\begin{minipage}[b]{0.22\textwidth}
			\includegraphics[width=1\textwidth]{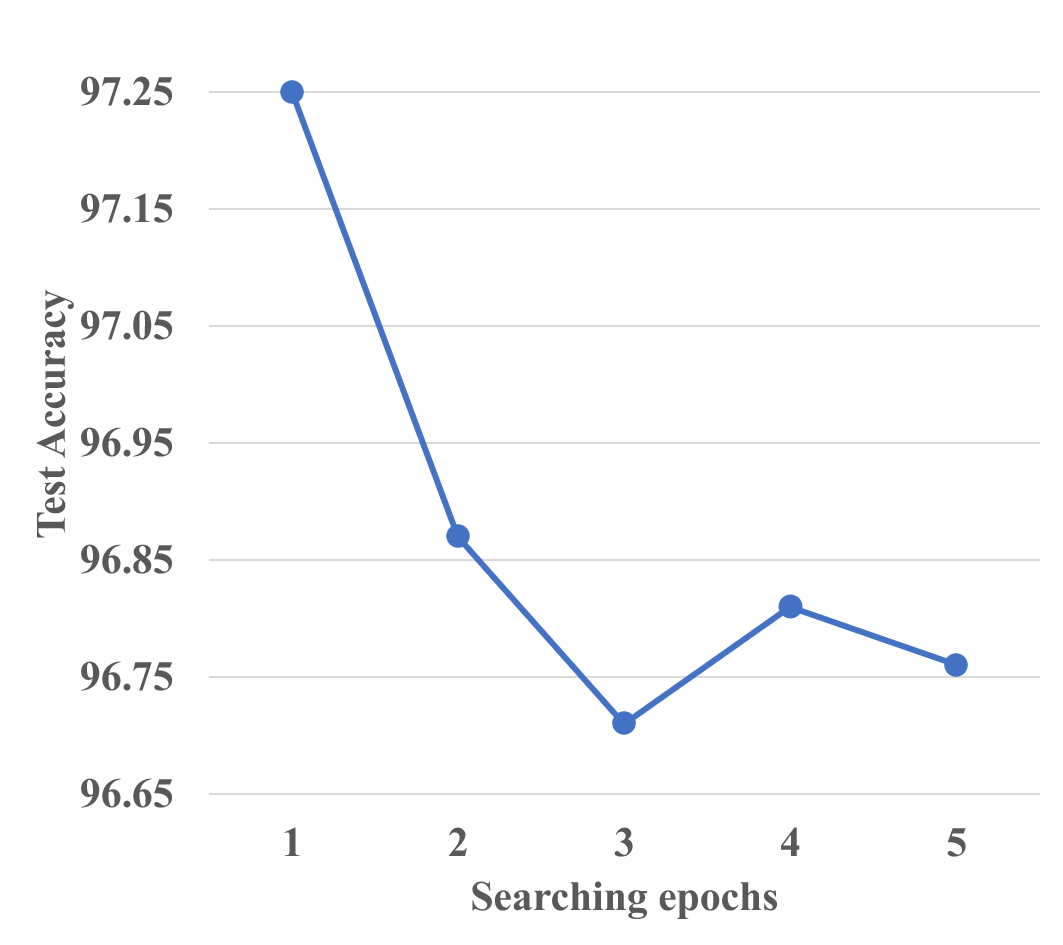} 
		\end{minipage}
		\label{fig:CIFAR10TestAccuracySearchEpochSkipConvinOneRun}
	}
    	\subfigure[]{
    		\begin{minipage}[b]{0.22\textwidth}
   		 	\includegraphics[width=1\textwidth]{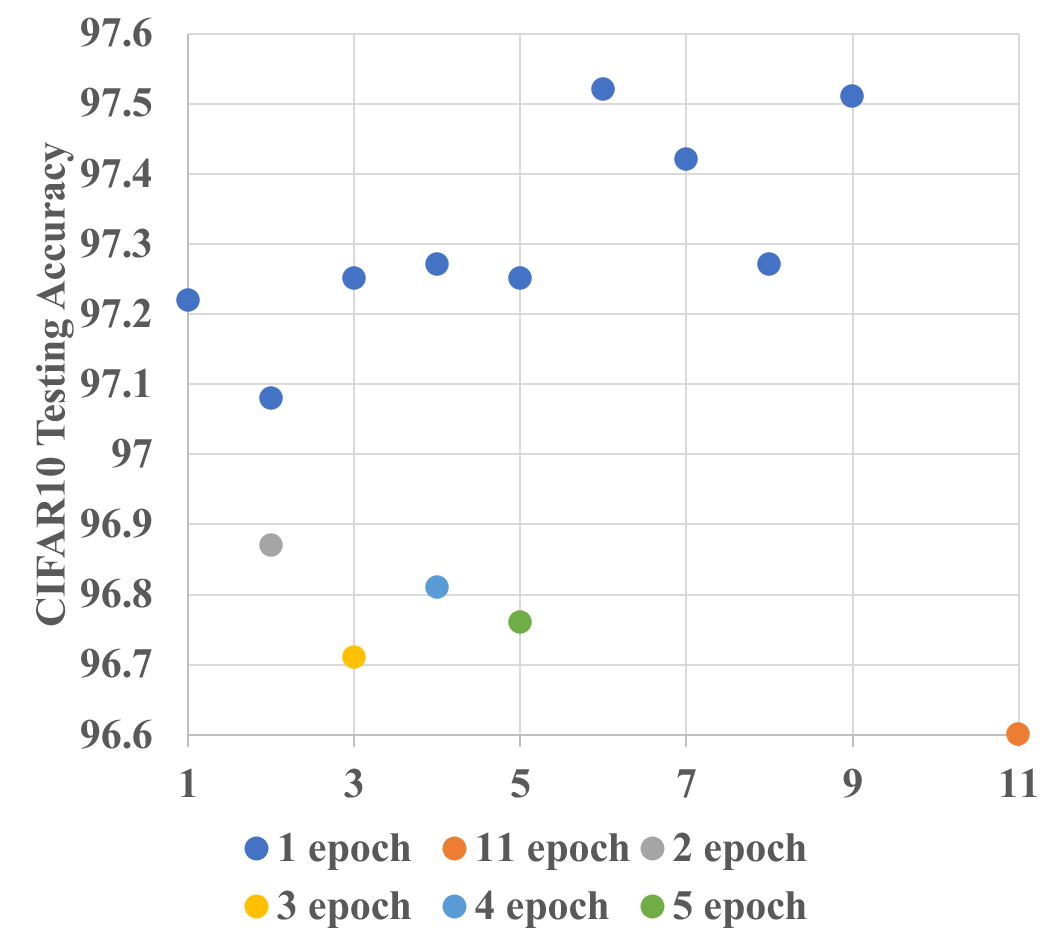}
    		\end{minipage}
		\label{fig:CIFAR10TestAccuracy-SearchEpochSkip-Conv}
    	}
	\subfigure[]{
		\begin{minipage}[b]{0.22\textwidth}
			\includegraphics[width=1\textwidth]{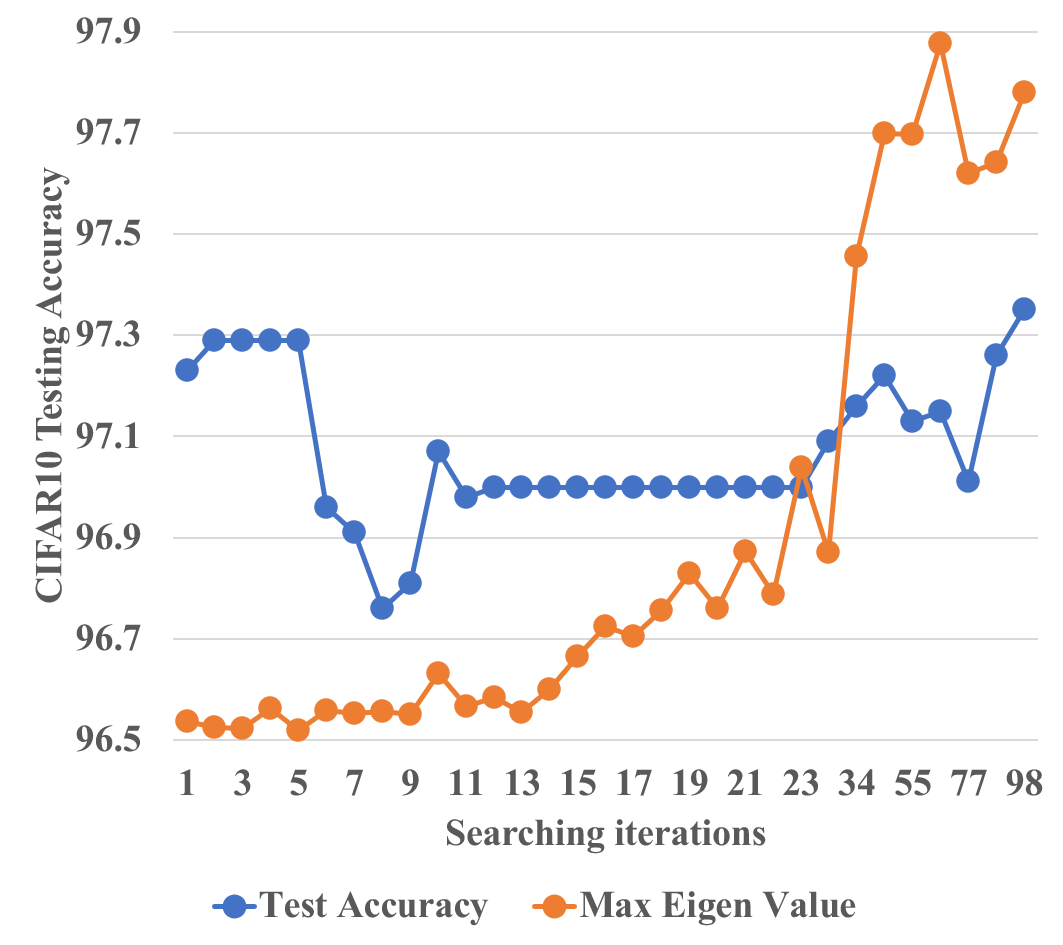} 
		\end{minipage}
		\label{fig:CIFAR10 Test Accuracy – Search Iteration (Skip-Conv)}
	}
    	\subfigure[]{
    		\begin{minipage}[b]{0.22\textwidth}
		 	\includegraphics[width=1\textwidth]{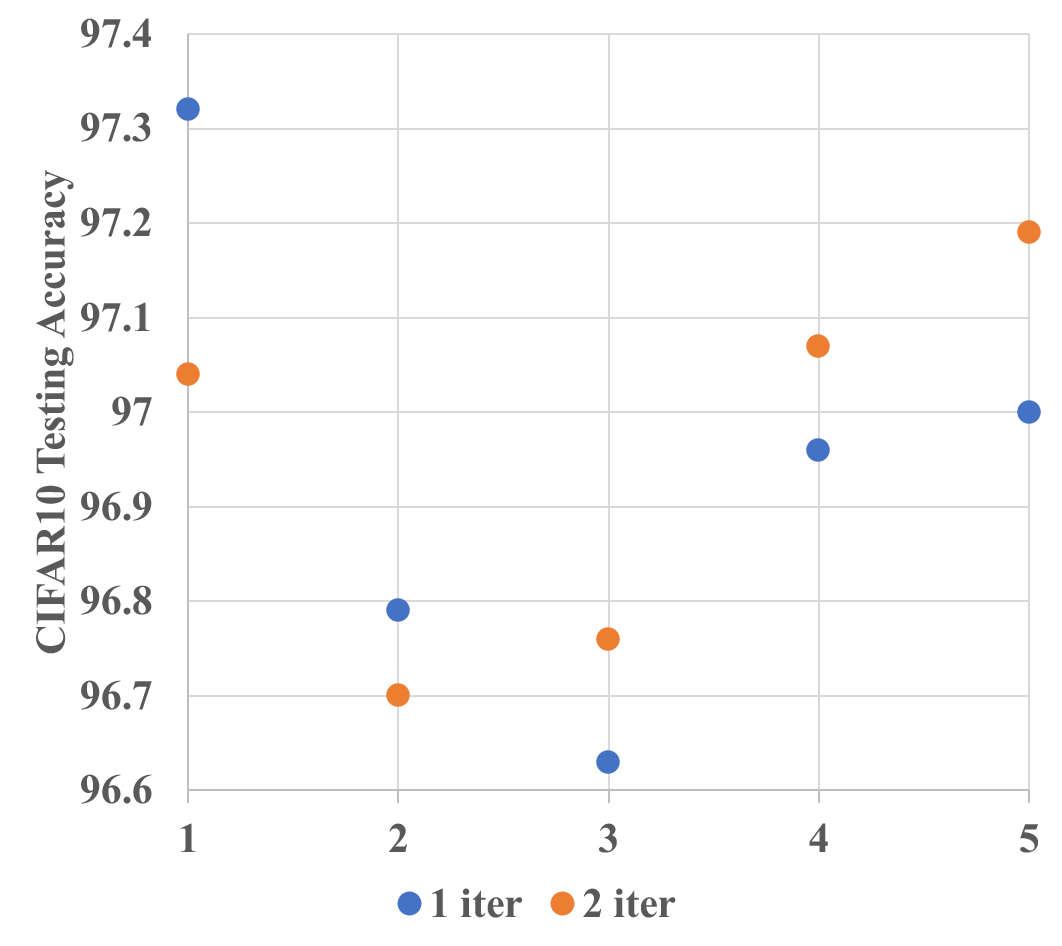}
    		\end{minipage}
		\label{fig:CIFAR10_1_vs_2_Iteration_Search_Accuracy_Comparison}
    	}
    	\vspace{-2ex}
	\caption{Ablation study Part 1.   \subref{fig:CIFAR10TestAccuracySearchEpochSkipConvinOneRun}: CIFAR10: Accuracy - Search epochs (in the same run);   \subref{fig:CIFAR10TestAccuracy-SearchEpochSkip-Conv}: CIFAR10: Accuracy - Search epochs (multiple runs);  \subref{fig:CIFAR10 Test Accuracy – Search Iteration (Skip-Conv)}: CIFAR10: Accuracy - Search iterations;    \subref{fig:CIFAR10_1_vs_2_Iteration_Search_Accuracy_Comparison}: Accuracy on CIFAR10: 1 vs 2 search iterations (multiple runs).}
	\label{fig:Search and evaluation period and candidate operators' impacts to FTSO on CIFAR 1}
\end{figure}

FTSO's huge reduction on the parameter numbers provides us a large number of benefits. As mentioned above, it allows the algorithm to converge in only a few iterations and prevents over-fitting. This is because when extracting the discrete sub-graph from the super-net, many architecture parameters are set to 0. The introduced disturbance impacts more on the over-fitting super-nets since they prefer sharper local minimums. 
Furthermore, Since FTSO only contains one operator with 0 parameters, the Matthew effect is eliminated.

\section{Experiments}
\label{section: Experiments}

Our search algorithm is evaluated on the three most widely-used datasets in NAS papers, namely, CIFAR10~\citep{CIFAR10}, ImageNet~\citep{ImageNet} and NATS-Bench~\citep{NATS-Bench}. Following DARTS, our search space contains a total of eight candidate operators: $3\times3$ and $5\times5$ \textit{separable convolutions}, $3\times3$ and $5\times5$ \textit{dilated separable convolutions}, $3\times3$ \textit{max} and \textit{average pooling}, \textit{skip connection} (i.e., $output=input$) and \textit{zero} (i.e., $output=0$). When searching for the topology, we pick only one operator from the candidate set. As mentioned in Section \ref{section: FTSO: Effective NAS via First Topology Second Operator}, we have two strategies to determine the operators, including the one based on gradients and the one directly replacing operators. In Sections \ref{section: experiments on cifar10} and \ref{section: experiments on imagenet}, we focus more on the second configuration. In Section \ref{section: ablation study}, we have a comprehensive comparison on these two strategies. All detailed configurations are shown in the Appendices. 
In addition, most of our experiments only search for one epoch or one iteration because of the benefits of FTSO’s huge reduction on the parameter numbers. For more experimental support, please refer to Section \ref{section: ablation study}. Note that it is almost impossible for the existing models to obtain satisfying results via only searching for one epoch or one iteration because their super-nets contain a large amount of parameters, which require a long period to tune.

\begin{figure}[htbp]
	\centering
	\vspace{-1ex}
    \subfigure[]{
		\begin{minipage}[b]{0.22\textwidth}
			\includegraphics[width=1\textwidth]{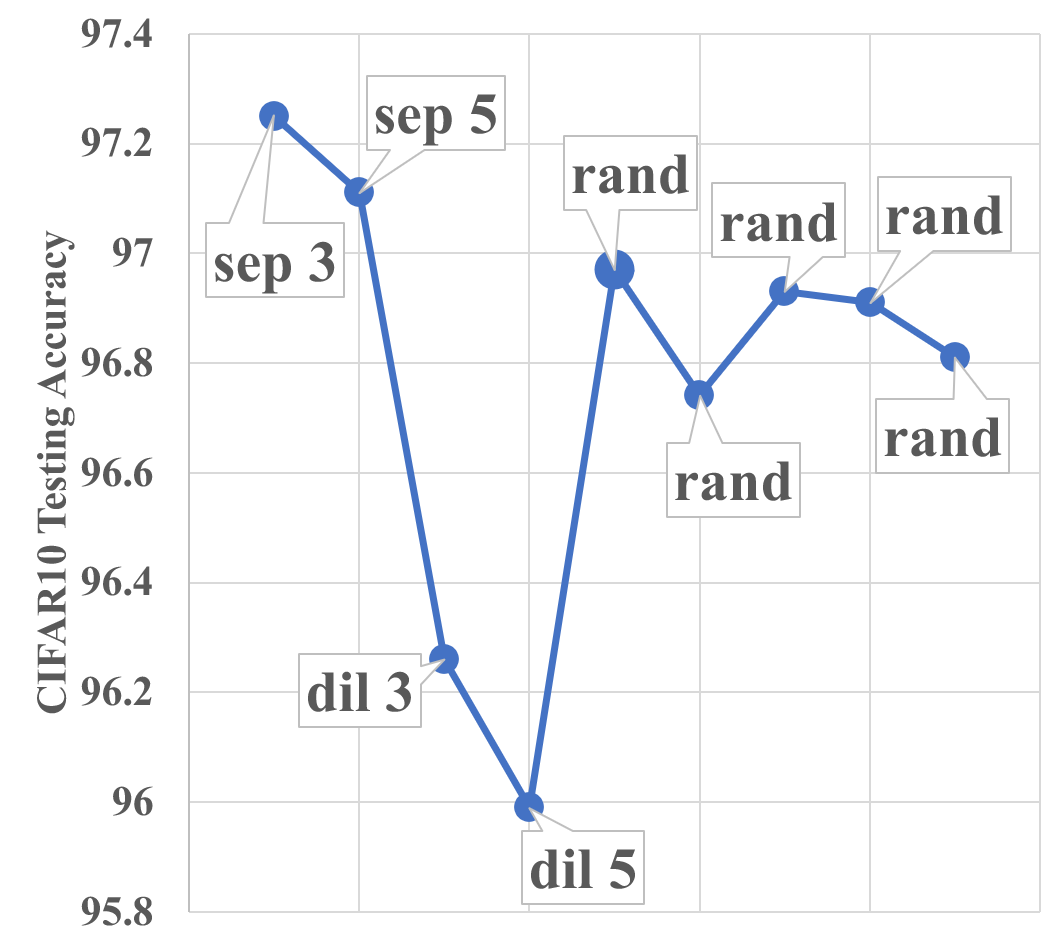} 
		\end{minipage}
		\label{fig:CIFAR10_Test_Accuracy_-_Replace_Op__Skip-Others_}
	}
    	\subfigure[]{
    		\begin{minipage}[b]{0.22\textwidth}
		 	\includegraphics[width=1\textwidth]{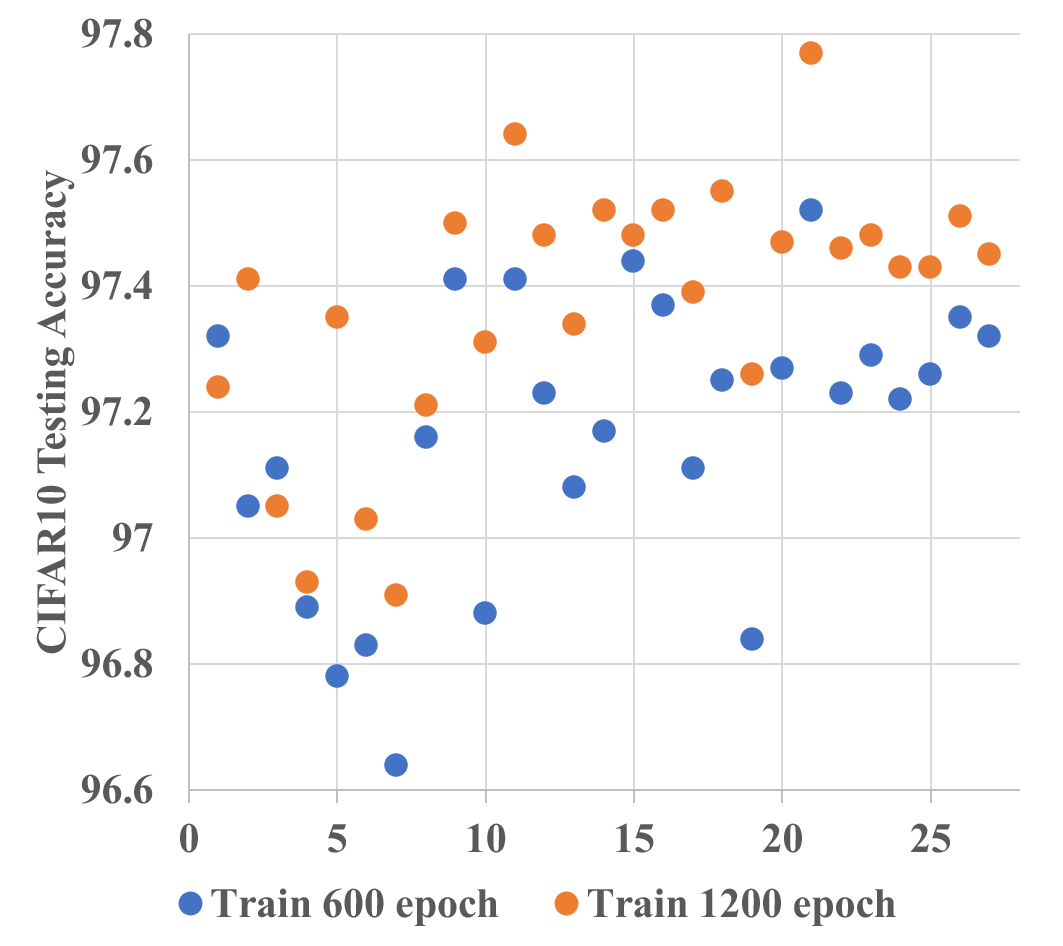}
    		\end{minipage}
		\label{fig:CIFAR10TestAccuracy-TrainEpoch}
    	}
    \subfigure[]{
        \begin{minipage}[b]{0.22\textwidth}
       		 \includegraphics[width=1\textwidth]{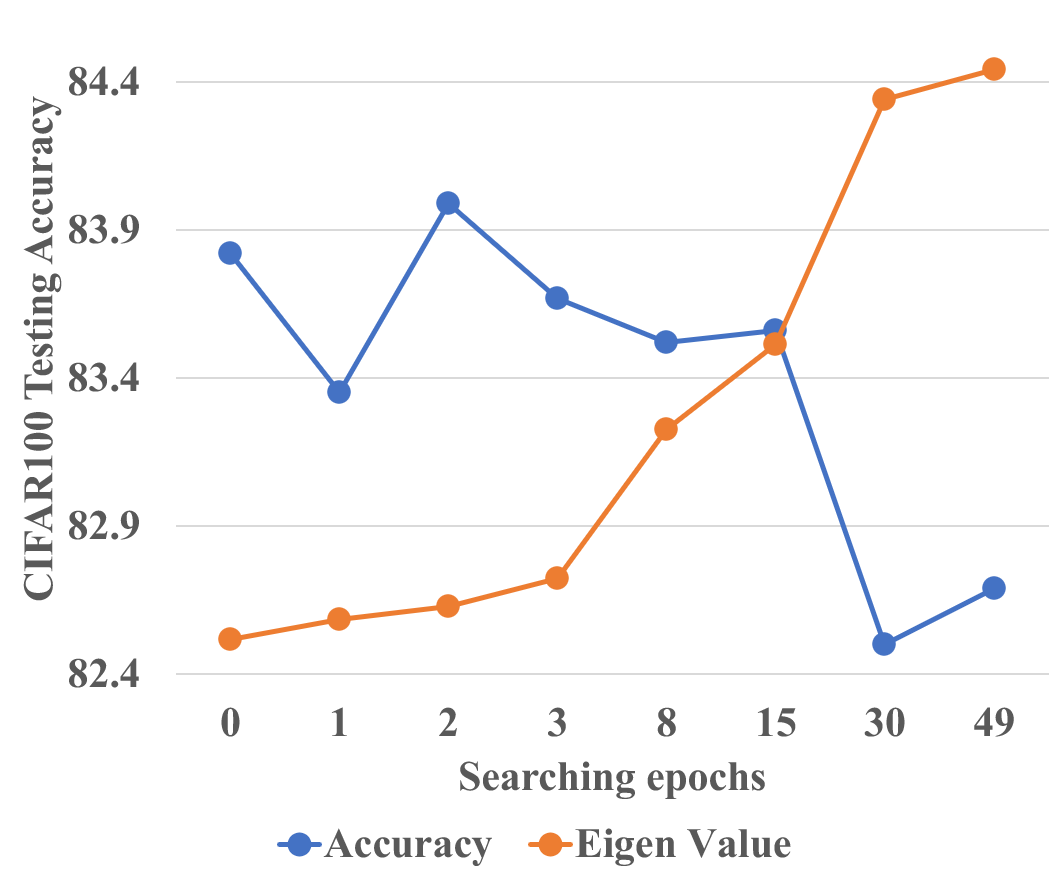}
        		\end{minipage}
    		\label{fig:Epoch-wiseCIFAR100TestAccuracyMaxEigenValue}
        	}
        \subfigure[]{
        	\begin{minipage}[b]{0.22\textwidth}
       		 \includegraphics[width=1\textwidth]{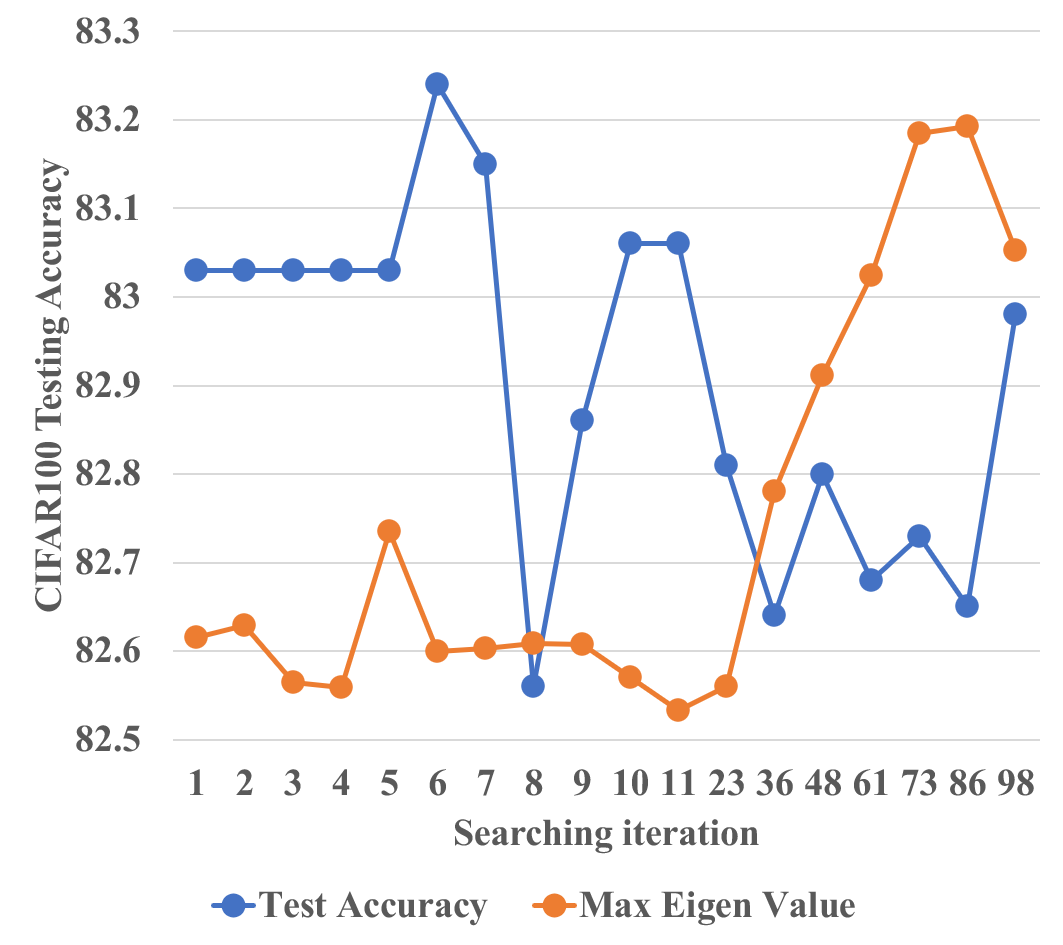}
        		\end{minipage}
    		\label{fig:Iter-wiseCIFAR100TestAccuracy-MaxEigenValue}
        	}
    \vspace{-2ex}
	\caption{Ablation study Part 2.   \subref{fig:CIFAR10_Test_Accuracy_-_Replace_Op__Skip-Others_}: CIFAR10: Accuracy - Operator replacing skip-connections; \subref{fig:CIFAR10TestAccuracy-TrainEpoch}: CIFAR10: Accuracy - Training epochs of the found architecture;  \subref{fig:Epoch-wiseCIFAR100TestAccuracyMaxEigenValue}: Epoch-wise on CIFAR100: Accuracy - Max eigenvalue $\nabla^2_{Arch}\mathcal{L}_{val}$; \subref{fig:Iter-wiseCIFAR100TestAccuracy-MaxEigenValue}: Iteration-wise on CIFAR100: Accuracy - Max eigenvalue $\nabla^2_{Arch}\mathcal{L}_{val}$.}
	\vspace{-2ex}
	\label{fig:Search and evaluation period and candidate operators' impacts to FTSO on CIFAR 2}
\end{figure}

\subsection{Results on CIFAR10}
\label{section: experiments on cifar10}

We compare FTSO to existing state-of-the-art NAS methods in Table \ref{Comp_CIFAR10}. In the experiment, we only search for the topology with skip-connections and then replace them all with \textit{$3\times3$ separable convolutions}. The reason that we do not adopt \textit{$5\times5$ separable convolutions} is that the pre-processed input images do not have enough resolutions, and the network is rather deep. After a few layers, the convolution's receptive field becomes larger than the whole image. At that time, larger convolutional kernels may not bring benefits. Instead, the extra parameters brought by the larger kernel size may lead to over-fitting. 

On the other hand, suppose both the image's resolution and the dataset's scale are big enough and the evaluation period is adequate,  the \textit{$5\times5$ separable convolution} might be a better choice. 
The found architecture after one epoch's search is shown in Figure \ref{FTSO's found architecture CIFAR10}. Due to FTSO containing only a few trainable parameters, it can even achieve comparable accuracy to PC-DARTS with only a one-time gradient update. Under this configuration, a mere $0.68$ seconds are required and $99.993\%$ of the search time is saved. In addition, as shown in Table \ref{Table:: Ablation study on FTSO}, when the topology is searched with powerful operators for a long period, an additional operator search usually helps. However, when we search for the topology with simple operators for a short period, omitting the operator search may lead to better results. This is because with simple operators and very few updates, the found topology can already generalize quite well.

\subsection{Results on ImageNet}
\label{section: experiments on imagenet}
On ImageNet, we use similar configurations to those on CIFAR10. When searching, we have two configurations. The detailed configurations are shown in the Appendices. Our experiments in Table~\ref{Comp_CIFAR10} show that FTSO is significantly superior to existing methods in both efficiency and effectiveness. The found architectures after one epoch's search on CIFAR10 and the entire ImageNet are shown in Figure \ref{FTSO's found architecture CIFAR10}. 

It is surprising that the best architecture we found on ImageNet is the shallowest and widest one. Compared to the much more 'reasonable' architectures shown in Figure \ref{fig:Normal Cell CIFAR10 sep + op} and \ref{fig:Reduction Cell CIFAR10 sep + op}, which were found with the topology search only containing $3\times3$ separable convolutions and an additional operator search on CIFAR10, the 'abnormal' architecture, containing the same amount of FLOPs and parameters, can achieve $0.78\%$ higher testing accuracy. We think this is because the whole model is stacked with many cells. If the depth of each cell is too high, it leads to a very deep neural network. At that time, because all the operators in our found architecture are convolutions, we cannot use skip connections to facilitate gradients’ propagation in ResNet’s manner. In this way, both the vanishing and explosion of gradients may prevent the deeper models from higher performance.

\subsection{Results on NATS-Bench}
\label{section: experiments on NATS-Bench}
In the search space of NATS-Bench, there is one input node, three intermediate nodes and one output node, and each intermediate node connects to all its predecessors. Here we implement FTSO based on DARTS instead of PC-DARTS, and we compare FTSO's performance to other NAS algorithms in Table \ref{Comp_NATSBench}. It is shown that FTSO dominates DARTS in all configurations. Coinciding with our analysis in Section \ref{section: FTSO: Effective NAS via First Topology Second Operator}, the architectures found by DARTS tend to only contain simple operators, thus cannot achieve satisfying accuracy. For example, when searching on CIFAR10, the architecture found by DARTS is full of \textit{skip connections} as shown in Figure \ref{fig:NASBenchallskip}. By comparison, as shown in Figure \ref{fig:NASBenchallsep}, the architecture found by FTSO is much more powerful.

\subsection{Ablation Study}
\label{section: ablation study}
In terms of a topology-only search, one epoch is just enough, thanks to the many fewer kernel weights contained in FTSO, and more search epochs bring obvious disadvantages because of over-fitting. Since one epoch performs better than more epochs, it raises the question whether one iteration is also superior to more iterations. We find that the found architecture's performance generally first drops and then increases in the first epoch, and then always decreases after the first epoch. In Figure \ref{fig:CIFAR10 Test Accuracy – Search Iteration (Skip-Conv)} we show that although one iteration cannot surpass one epoch, it is better than a few iterations. This is because when we only search for one iteration, the model does not over-fit the data, thus the model generalizes well. When we only searched for a few iterations, the number of different images seen by the model is not big enough. However, since the super-net only contains skip connections, such a number of gradient updates is enough for the architecture parameters to become over-fitted. This is the reason that a few iterations perform worse than one iteration. After we have searched for one whole epoch, the super-net has seen enormous different images, which helps it to generalize better on the testing set. This is the reason one epoch performs the best. 
In terms of whether we should search for one iteration or two, in Figure \ref{fig:CIFAR10_1_vs_2_Iteration_Search_Accuracy_Comparison}, we show that both choices work well. When we do not search for the operators after the topology search, we assign all the remaining edges a fixed operator. Thus, which operator we should choose becomes a critical question. Figure \ref{fig:CIFAR10_Test_Accuracy_-_Replace_Op__Skip-Others_} show that a \textit{$3\times3$ separable convolution} can indeed outperform all other operators in terms of accuracy. 

\begin{figure}[H]
	\centering
	\vspace{-2ex}
	
    \subfigure[]{
		\begin{minipage}[b]{0.22\textwidth}
			\includegraphics[width=1\textwidth]{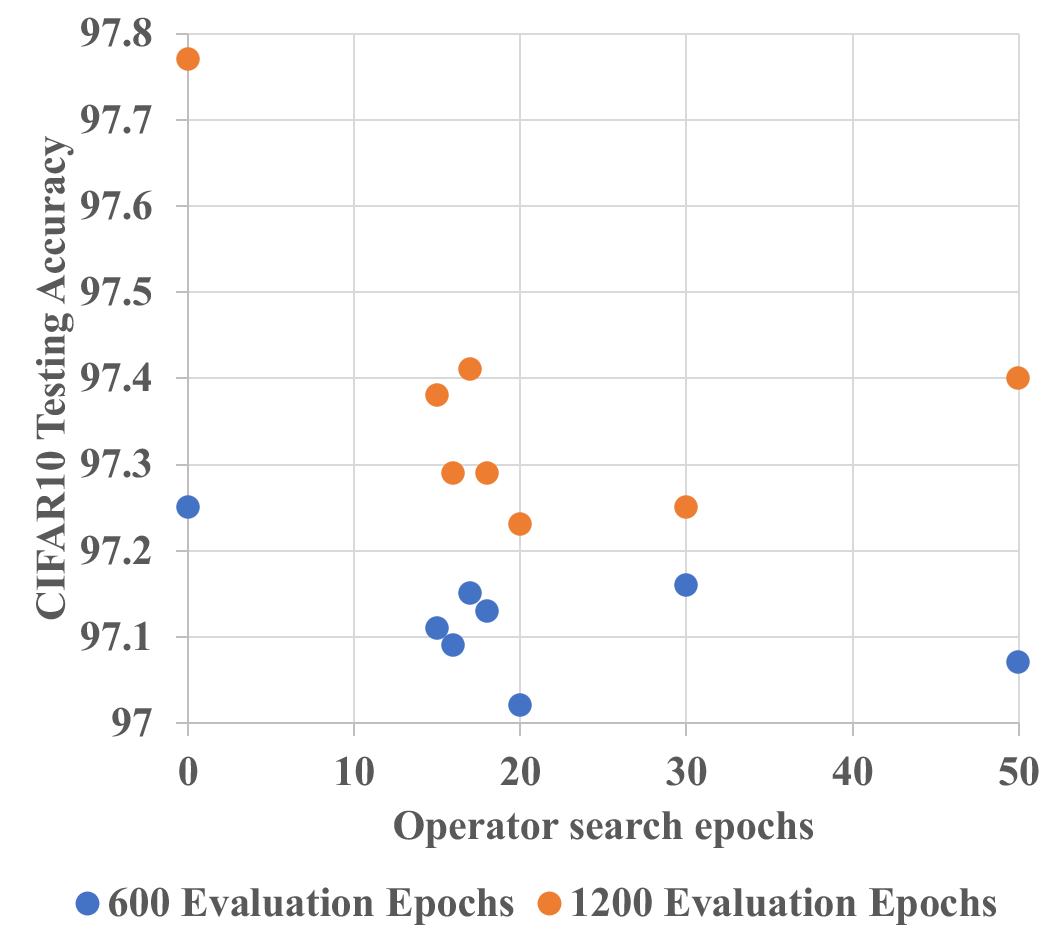} 
		\end{minipage}
		\label{fig:Operator search's Impacts by epochs (Same Run)}
	}
    	\subfigure[]{
    		\begin{minipage}[b]{0.22\textwidth}
   		 	\includegraphics[width=1\textwidth]{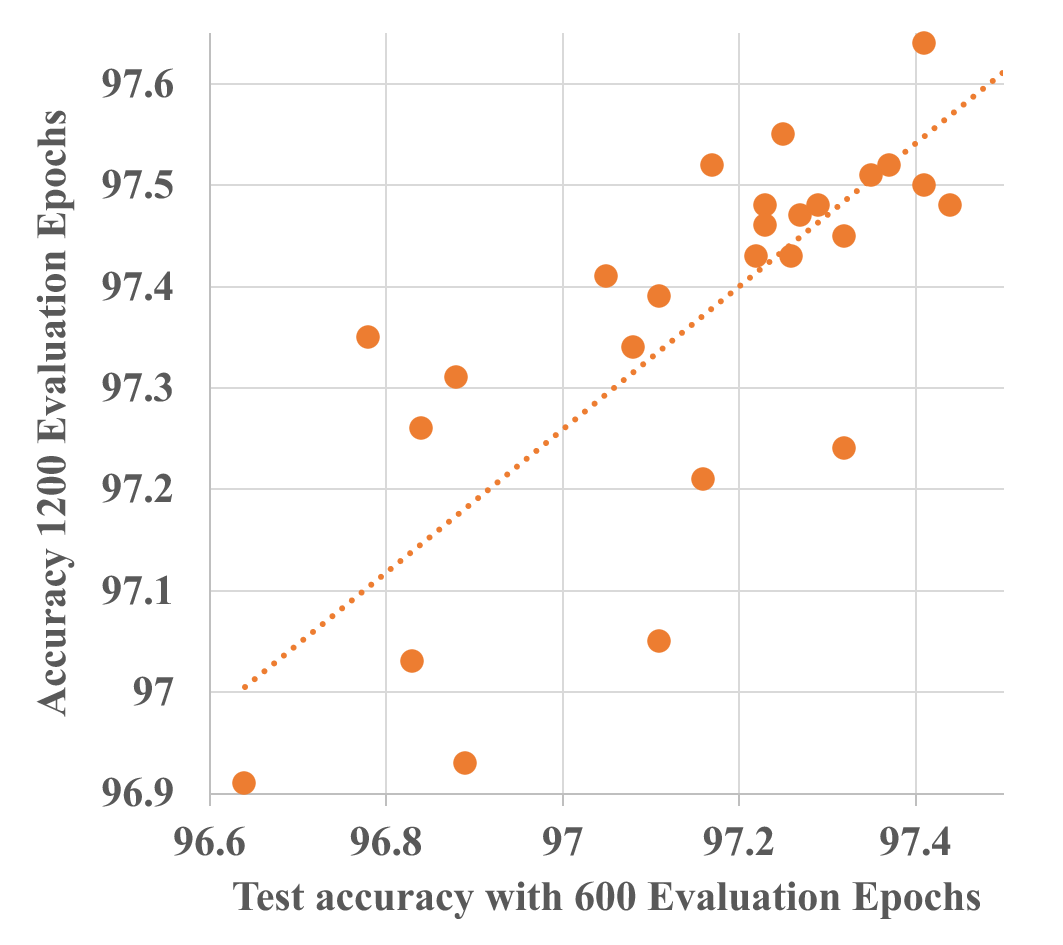}
    		\end{minipage}
		\label{fig:EvaluationEpochsImpactsAbsoluteTestAccuracy}
    	}
	\subfigure[]{
		\begin{minipage}[b]{0.22\textwidth}
			\includegraphics[width=1\textwidth]{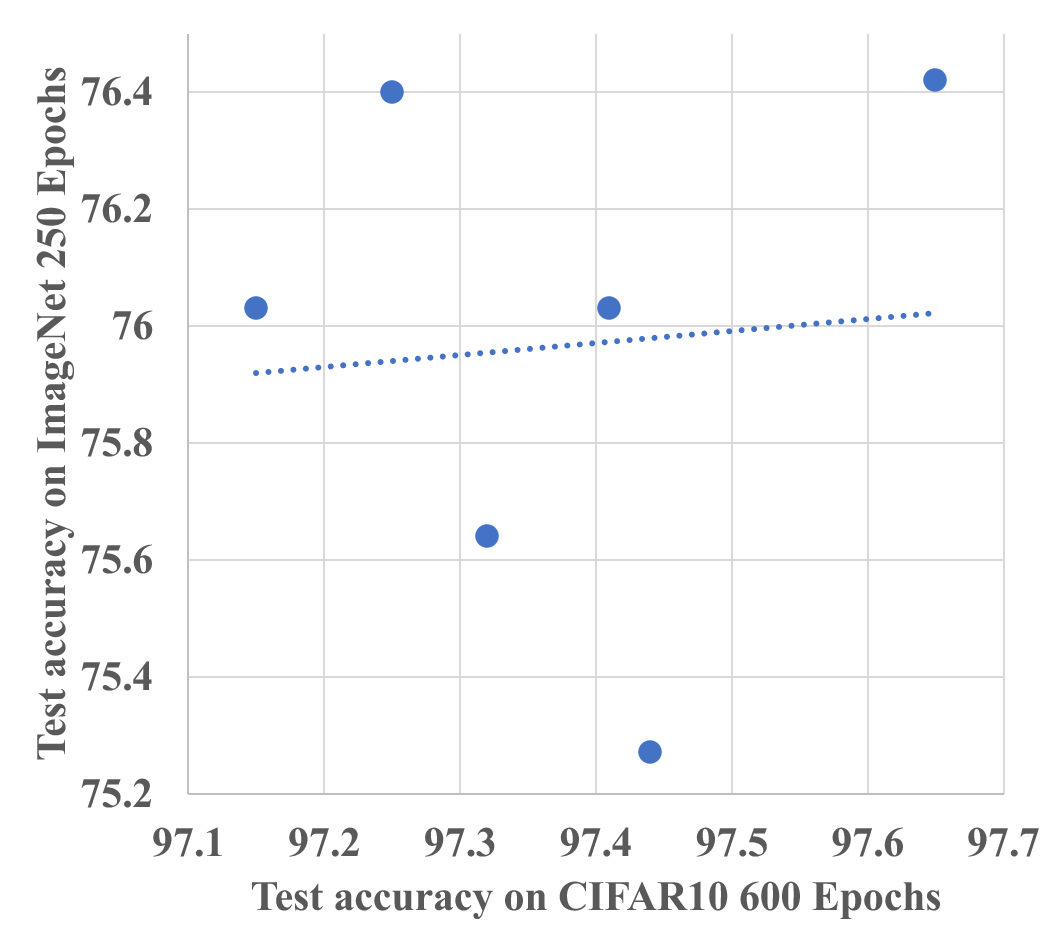} 
		\end{minipage}
		\label{fig:TestAccuracyRelationshipbetweenCIFAR10600epochsandImageNet250epochs}
	}
    	\subfigure[]{
    		\begin{minipage}[b]{0.22\textwidth}
		 	\includegraphics[width=1\textwidth]{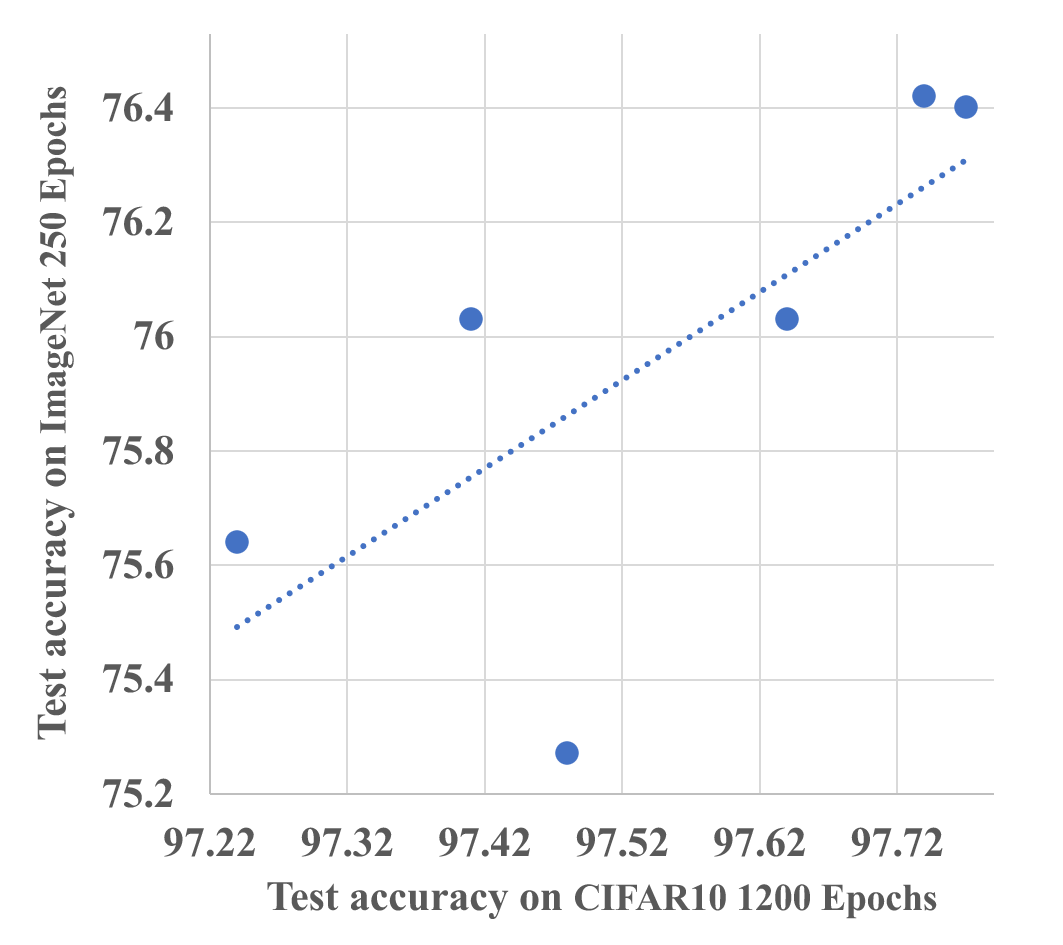}
    		\end{minipage}
    	\label{fig:TestAccuracyRelationshipbetweenCIFAR101200epochsandImageNet250epochs}
    	}
    \vspace{-1ex}
    \caption{Ablation study Part 3.   \subref{fig:Operator search's Impacts by epochs (Same Run)}: CIFAR10: Operator search's epoch-wise impact on testing accuracy (0 epoch means only searching for the topology);    \subref{fig:Evaluation Epochs’ Impacts to Test Accuracy}: CIFAR10: The found architecture's training epochs’ impacts on the testing accuracy (same architecture);   \subref{fig:TestAccuracyRelationshipbetweenCIFAR10600epochsandImageNet250epochs}:Testing accuracy: CIFAR10 600 evaluation epochs - ImageNet 250 evaluation epochs (evaluating after training the same architecture on different datasets);   \subref{fig:TestAccuracyRelationshipbetweenCIFAR101200epochsandImageNet250epochs}: Testing accuracy: CIFAR10 1200 evaluation epochs - ImageNet 250 evaluation epochs (evaluating after training the same architecture on different datasets)}
	\vspace{-2ex}
	\label{fig:Search and evaluation period and candidate operators' impacts to FTSO on CIFAR Table 3}
\end{figure}

As shown in Figure \ref{fig:CIFAR10TestAccuracy-TrainEpoch}, we find that under a different number of evaluation epochs, both the absolute value and the relative ranking of the same architecture's testing accuracy may vary; i.e., some architectures which perform well within 600 epochs perform poorly after 1200 epochs. However, in general, they still obey a positive correlation, with a Pearson correlation coefficient of $0.77$, as shown in Figure \ref{fig:Evaluation Epochs’ Impacts to Test Accuracy}. In terms of the generalization ability from CIFAR10 to ImageNet, Figure \ref{fig:TestAccuracyRelationshipbetweenCIFAR101200epochsandImageNet250epochs} reveals that the architectures which perform well after long-term evaluation on CIFAR10 can usually generalize better on ImageNet, with a correlation coefficient of $0.7$; yet, as shown in Figure \ref{fig:TestAccuracyRelationshipbetweenCIFAR10600epochsandImageNet250epochs}, there is no guarantee that those working well on CIFAR10 within limited evaluation epochs can also dominate on ImageNet. This is because it only proves that they can converge quickly, not that they can converge to a global optimum.

In Figures \ref{fig:Epoch-wiseCIFAR100TestAccuracyMaxEigenValue} and \ref{fig:Iter-wiseCIFAR100TestAccuracy-MaxEigenValue}, we show that one epoch is not only an optimal choice on CIFAR10, but also enough for the topology-only search on CIFAR100. In addition, as the search epochs and iterations increase, the max eigenvalue of the loss's Hessian matrix on the validation set increases. At the same time, the testing accuracy generally decreases because the model's generalization ability is dropping. This phenomenon is particularly obvious epoch-wise because, after just a few iterations, the model can already reach a comparable accuracy on the training set. Then the model's performance on the testing set starts to relate with its generalization ability.

\section{Generalization to other tasks and search spaces}
\label{section: Generalization to other tasks and search spaces}
As shown in Section \ref{section: Experiments}, FTSO works well under different search spaces and node numbers. Theoretically, FTSO's advantages to DARTS can be enlarged while the search space and node number increase. The reason is that FTSO reduces the computational cost from $O(n^3)$ to $O(n^2)$ and avoids over-fitting. Based on such considerations, as the future work, we plan to apply FTSO in high-level tasks, for example, instance segmentation and multi-view stereo. Although in this paper, we establish FTSO within differentiable search spaces, in fact, the first topology second operator strategy is not limited to any specific search space or tasks. Whether or not the search space is discrete or continuous, or the search algorithm is gradient-based or reinforcement learning-based, we first shrink the candidate operator set, and only retain the simplest operator, in language modeling which might be a \textit{skip connection} or a \textit{pooling layer}. After this, the size of the whole search space is reduced in magnitude. Then, we search for the best topology with any available search algorithm. In this way, a promising topology can be found. Then, we can either directly assign each edge a powerful operator, in language modeling which might be a \textit{LSTM unit} or an \textit{attention layer}, or use gradients to search for operators. Generally, the directly replacing strategy leads to higher accuracy, and the gradient-based strategy reduces the model complexity.

\section{Conclusion}
In this paper, we propose an ultra computationally efficient neural architecture search method named FTSO, which reduces NAS's search time cost from days to less than 0.68 seconds, while achieving 1.5\% and 0.27\% accuracy improvement on ImageNet and CIFAR10, respectively. Our key idea is to divide the search procedure into two sub-phases. In the first phase, we only search for the network's topology with simple operators. Then, in the second phase, we fix the topology and only consider which operators we should choose. 

Our strategy is concise in both theory and implementation, and our promising experimental results show that current NAS methods contain too much redundancy, which heavily impacts the efficiency and becomes a barrier to higher accuracy. What is more, as mentioned in Section \ref{section: Generalization to other tasks and search spaces}, our method is not bound by differentiable search spaces as it can also cooperate well with existing NAS approaches.

\bibliography{example_paper}
\bibliographystyle{icml2021}

\end{document}